\newtheorem{definition}{Definition}
\newtheorem{theorem}{Theorem}
\newtheorem{lemma}{Lemma}
\begin{document}
\title{Federated Recommendation System via Differential Privacy
\thanks{The work of T. Li and L. Song was supported in part by sthe Hong Kong RGC grant ECS 9048149 (CityU 21212419), the Guangdong Basic and Applied Basic Research Foundation under Key Project 2019B1515120032. The work of C. Fragouli was supported in part by the NSF grant 1740047 and the UC-NL grant LFR-18-548554.} }

\author{\IEEEauthorblockN{Tan Li$^{1,2}$, Linqi Song$^{1,2}$, Christina Fragouli$^{3}$ }
	\IEEEauthorblockA{ $^1$Department of Computer Science, City University of Hong Kong,\\
		$^2$City University of Hong Kong Shenzhen Research Institute,\\
	$^3$  University of California, Los Angeles }	 		
	\IEEEauthorblockA{Email: \{tanli6-c@my., linqi.song@\}cityu.edu.hk, christina.fragouli@ucla.edu.}
}

\maketitle

\begin{abstract}
In this paper we are interested in what we term the \textit{federated private bandits} framework, that combines differential privacy with multi-agent bandit learning. We 
explore how differential privacy based Upper Confidence Bound (UCB) methods  can be applied to multi-agent environments, and in particular to federated learning environments both in `master-worker' and `fully decentralized' settings. We provide theoretical analysis on the privacy and regret performance of the proposed methods and explore the tradeoffs between these two.
\end{abstract}

\begin{IEEEkeywords}
Federated learning, multi-arm bandit, differential privacy, distributed learning
\end{IEEEkeywords}

\section{Introduction}
The promise of distributed computing is to improve the efficiency and robustness of machine learning tasks by leveraging communication networks to share the computational load, leading to a compelling vision of world-wide computing \cite{1}. However, no matter how compelling this vision is,
it cannot get realized before we address a number of challenges,
of which an important one is privacy.

In this paper, we consider privacy vs. learning trade-offs for wireless recommendation systems, that are one of the most popular learning algorithms in the consumer domain, and are considered a key application of edge-based wireless distributed systems  \cite{song2017making} \cite{song2018itw} \cite{8849556}. As a use case, we consider a multi-chain of stores, such as a fastfood chain, that make local recommendations to their customers, but then wish to aggregate the overall client responses to provide new recommendations or launch new products.  We assume that the client responses - what items they like and how much - are the private data we want to protect.

We pose our problem within the federated learning framework,  proposed by Google\cite{konevcny2016federated}, that addresses the privacy challenge by maintaining the user data locally, while 
combining learning models among the distributed agents. In particular, we consider a federated multi-armed bandit (MAB) setup, where each distributed agent could be a local store that makes recommendations, while the aggregator is the parent company. The question we explore is, can we leverage the aggregator to better inform what recommendations to make at the distributed agents, without compromising the user data privacy.

We consider in particular a distributed version of the UCB algorithm: we assume that each agent (store) makes a number of recommendations locally and calculates  a sequence of local average reward values. To combine the local models, we need to reveal the average values sequence to the aggregator, without compromising the privacy of the data. We do so by leveraging  differential privacy (DP) \cite{dwork2014algorithmic} techniques that preserve  privacy of  reward sequences. Maintaining privacy amounts to adding a form of noise, 
 which can affect which items the aggregator decides to recommend next, and which in turn can lead to a higher regret. This paper investigates  this privacy/regret trade-off.

\subsection{Related Work}
The MAB algorithm is widely used in recommendation systems due to its simplicity and efficiency \cite{li2010contextual}\cite{zeng2016online}. Auer \textit{et al.} \cite{auer2002finite} developed the UCB algorithm, which is an index-based policy relying on average reward plus an upper confidence bound. Another mainstream approach is the sampling-based approach \cite{agrawal2012analysis} that instead of computing a deterministic index, it uses a sample generated by a Bayesian estimator. 

There has been a growing literature that extends the MAB problem into multi-user settings. Liu and Zhao \cite{5535151} consider a distributed bandit problem with \textit{collisions}: choosing the same arm simultaneously leads to a reduced reward for two or more agents. Similar approaches can be found in \cite{6763073} \cite{rosenski2016multi} that utilize different matching algorithms to avoid collisions. Later work \cite{martinez2019decentralized} makes use of gossip algorithm or running consensus methods to keep an approximation of the average value between agents and their neighbors. However, few works have considered accommodating privacy considerations in the learning process.

There is also a very rich literature on differential privacy, mostly applied in deep learning\cite{abadi2016deep} and information theory fields. For decision-making problems, Tossou and Dimitrakakis present algorithms for differentially private stochastic MAB \cite{10.5555/3016100.3016190}. The work in \cite{mishra2015nearly} also investigates this problem. However, all these works operate under a single user setting. As far as we know, our federated private bandit algorithm is the first work that considers both differential privacy and communication in cooperative bandit problems.

\subsection{Main Contributions}
Our work proposes a new bandit learning framework, the \textit{federated private bandits} that combines differential privacy with multi-agent bandit learning. Our key contributions are as follows.

i) We introduce a federated private bandit framework. For each agent, we apply an ($\epsilon,\delta$) differentially private variants of the UCB scheme. Specifically, the \textit{hybrid mechanism} \cite{chan2011private} is used to track a non-private reward sequence for each agent and to output a private sum reward. The agents then use this private sum reward plus a relaxation of the upper confidence bound to update the arm index.
 
 ii) We consider two multi-agent settings: (a) the DP-Master-worker UCB (a master-worker structure): an external central node can observe all individual agent models and can return back an aggregated one to all agents; (b) the DP-Decentralized UCB (fully decentralized with networked structure): the agents average their model with their neighbors' information using a \textit{consensus algorithm} without the help of a central node. In both methods, the  real rewards are kept private from all agents.
 
 iii) We analyze  both the privacy and regret performance of our federated private UCB algorithms and characterize the influence of communication and privacy on decision making. In particular, we evaluate the trade-off between the privacy and regret.

\section{System Model and Problem Formulation}

We consider a federated recommendation system with $M$ subsystems or agents, where each agent can make recommendations to its local users. We allow the agents to communicate either through a central node (master-worker structure) or directly with their neighbors (networked fully decentralized structure), to aggregate their knowledge of the user preferences. We discuss both the `master-worker' distributed structure and the fully decentralized structure in this paper. All $M$ nodes are associated with $K$ arms (e.g., movies, ads, news, or items) from an arm set $\mathbf{A}: =\left\lbrace  1,2,...,K\right\rbrace $ that can be recommended to the users. 

\subsection{Federated Private Bandit Framework}
The above system model can be formulated as a $K$-armed bandit problem with $M$ distributed agents. At time slot $t$, each agent chooses and pulls an arm from the set of $K$ arms, and then the arm $j \in \mathbf{A}$ chosen by agent $i \in [M]$ generates an i.i.d. reward $r_{i,j}(t)$ from a fixed but unknown distribution at time $t$. We denote by $\mu_{i,j}$ the unknown mean of reward distribution. In our model, the reward distribution of each arm is the same for each agent, i.e., for all arms $1\leq j \leq K$, $\mu_{1,j}=\mu_{2,j}=...=\mu_{i,j}=...=\mu_{M,j}$, and thus in the rest of the paper we use $\mu_j$ for simplicity.

The arm that agent $i$ plays at time $t$ is denoted as $a_i(t)\in \mathbf{A}$. Let $q_i(t)$ be the communication message sent by agent $i$ and $q_{-i}(t)$ be the messages received by agent $i$ at time $t$. Here, messages can be learning model parameters which will be specified later. Then the policy $\pi_i(t)$ for agent $i$ can be viewed as a mapping from the collected history set to the action set. That is, $\pi_i(t):\mathit{H}_{i}(t)\rightarrow\mathbf{A}$, where the history $\mathit{H}_{i}(t)$ gathers actions, rewards, and message exchange of the past $\mathit{H}_i(t)=\left\lbrace{(a_i(1),r_{i,a_i(1)}(1),q_{-i}(1)),...,(a_i(t-1),r_{i,a_i(t-1)}(t-1),}\right.\\\left.{ q_{-i}(t-1)) }\right\rbrace  $. The overall objective of the $M$ agents is to maximize the expected sum reward over a finite time horizon $T$:
$\mathbb{E}[\sum_{t=1}^{T}\sum_{i=1}^{M}r_{i,a_i(t)}(t)]$.
Without loss of generality, we can assume that $\mu_1$ is always the best arm for each agent. Then the suboptimality gap can be defined as $\Delta_{j} : = \mu_{1} - \mu_{j}$ for any arm $j\not=1$. 
Let $n_{i,j}(t)$ be the number of times arm $j$ is pulled by agent $i$ up to time $t$, then the number of times arm $j$ is pulled by all the agents in the network up to time $t$ can be calculated as $n_{j}(t):=\sum_{i=1}^{M }n_{i,j}(t)$. 

The learning goal is to minimize the overall expected regret, which is defined as the expected reward difference between the best arm and the online learning policies of the agents. For policies with action $a_i(t)$ ($\forall i\in [M], \forall t$), the overall expected regret is defined as
\begin{equation}
\mathit{R}(T) = TM\mu_1-\mathbb{E}[\sum_{t=1}^{T}\sum_{i=1}^{M}\mu_{i,a_i(t)}(t)]
=\sum_{j=2}^{K}\Delta_j\mathbb{E}[n_j(T)]
\end{equation} 


\subsection{Differential Privacy}\label{DP}
We use differential privacy as our privacy metric and briefly review some background material in the following.

\begin{definition}[Differential Private Bandit Algorithm]

A bandit algorithm $\pi_i$ for agent $i$ is ($\epsilon,\delta$)-differentially private if for all two neighboring reward sequences $\mathbf{r}(t)=\left\lbrace r_{i,a_i(1)}(1),...,r_{i,a_i(t)}(t)\right\rbrace $ and  $\mathbf{r}'(t)=\left\lbrace r'_{i,a_i(1)}(1),...,r'_{i,a_i(t)}(t)\right\rbrace $ (i.e., that differ on at most 1 position), for all subsets $\mathcal{S}\subseteq \mathcal{A}$, and for all measurable image subsets $\mathcal{Q}$ of $q_i(t)$, the following holds:
\begin{equation}
\begin{array}{llll}
\Pr\{a_i(t)\in\mathcal{S},q_i(t)\in\mathcal{Q}|\mathbf{r}(t)\}\leq  \\ \exp{(\epsilon)}\Pr\{a_i(t)\in\mathcal{S},q_i(t)\in\mathcal{Q}|\mathbf{r}'(t)\}+\delta.
\end{array}
\end{equation}
We say the algorithm of the system is ($\epsilon,\delta$)-differentially private if (2) holds for all agents.
\end{definition}  	
Intuitively, for our bandit problem, if the reward $r_{i,j}(\tau)$ for arm $j$ and agent $i$ is the private information, the definition above implies that we want the algorithm to protect the arm's reward realization $r_{i,j}(\tau)$ against an adversary even if the adversary can observe the output actions $a_i(1),a_2(2),\ldots,a_i(t)$, the transmitted information $q_i(1),q_i(2),\ldots,q_i(t)$, and other reward realizations. 

A commonly used differential privacy scheme is the \textit{Laplace} mechanism, which simply adds a \textit{Laplace} noise $N\sim~Lap(\frac{s}{\epsilon})$ to the private data communicated. In our problem, we employ a more sophisticated  differential privacy mechanism, termed the hybrid mechanism, that we briefly describe next.


The Hybrid Mechanism\cite{chan2011private} is a \textit{tree based aggregation} scheme that releases private statistics over a data sequence. Consider a reward sequence $\mathbf{r}=(r(1),r(2),...,r(T))$, where at each time $t$ a new $r(t)\in[0,1]$ is inserted. Assume we want to output the partial (up to time $t$) sum $s(t)=\sum_{i=1}^{t}r(i)$ while ensuring that the sequence $\mathbf{r}$ is $(\epsilon,\delta)$-private.
The Hybrid mechanism outputs partial sums at times $t=2^k,k=1,2,..$. 
For the time period $2^k$ and $2^{k+1}$, 
the mechanism
constructs a binary tree $B(t)$ that has as leaves the inputs $r(i)$,  all other nodes store partial sums, and the root node contains the sum from $2^k$ to $2^{k+1}-1$. The mechanism 
outputs a private sum $L(t)$ by adding a \textit{Laplace} noise of scale $\frac{1}{\epsilon}$, i.e., $Lap(\frac{1}{\epsilon})$ to a set of nodes that ``cover'' all the inputs.
As compared to the straightforward approach of adding noise to each sample $r(i)$, this method enables to output partial sums that satisfy the same differential privacy gurantees adding overall less noise - 
indeed there is only a logarithmic amount of noise added for any given sum because of the logarithmic tree depth. 


\section{Federated Private Multi-Armed Bandits}

In this section, we present two algorithms for the federated private bandit problems under different settings and provide their performance analysis. Our algorithms combine the non-private UCB algorithm\cite{auer2002finite} with the Hybrid $(\epsilon,\delta)$ differential privacy technique.

In the UCB algorithm, at time slot $t$, each arm $j$ of agent $i$ updates an estimate of the index $I_{i,j}(t)$, which is calculated as the sum of the empirical mean $Y_{i,j}(t)$ and an upper confidence bound:
$I_{i,j}(t)=Y_{i,j}(t)+\sqrt{\frac{2\text{log}t}{n_{i,j}(t)}}$.
Here, $Y_{i,j}(t)={y_{i,j}(t)}/{n_{i,j}(t)}$, $y_{i,j}(t)$ is the sum of observed rewards and $n_{i,j}(t)$ is the total number of times that arm $j$ has been pulled until time $t$.

To achieve differential privacy, we apply a DP mechanism as shown in Figure. 1. In particular, we instantiate the hybrid mechanism $H_{i,j}$ for each arm $j$ at each agent $i$, which keeps track of the non-private empirical mean $Y_{i,j}$ and outputs a private mean $X_{i,j}$. Here $X_{i,j}=s_{i,j}/n_{i,j}$ and $s_{i,j}$ is the private sum reward. The agents select actions based on the private mean $X_{i,j}$ instead of the empirical mean $Y_{i,j}$, thus ensuring that the actions are also differentially private. 
\begin{figure}[ht]	
	\centering
	\includegraphics[scale=0.20]{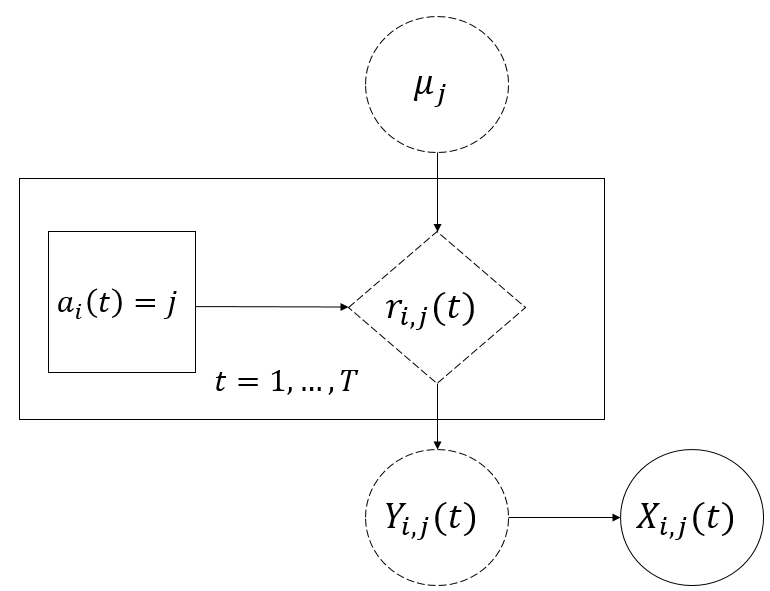}
	\caption{Graphical model for the hybrid mechanism $H_{i,j}$ of agent $i$. }
	\label{fig:label}
\end{figure}

We present two federated learning algorithms. The first, termed DP-Master-worker UCB algorithm, employs the DP mechanism to compute the individual arm index that consists of a private mean as well as an additional privacy-induced uncertainty. Then, the central node aggregates and returns back an aggregated index that will be used arm selection. The second, termed DP-Decentralized UCB algorithm, employs the same DP mechanism, but the agents estimate the index by averaging their neighbors' input.


\subsection{DP-Master-worker UCB algorithm}
In Algorithm 1, each arm $j$ of each agent $i$ uses the DP mechanisms $H_{i,j}$ (shown in Figure 1) to maintain a private total reward $s_{i,j}$. The communication phase begins when the counter $\eta =2^p~ \text{for}~p=1,2,...$. The individual arm index of each agent $i$ is first updated using the private mean, the upper confidence bound and the additional noise due to privacy (Line 12). Then, the central node averages all the private indices to compute an average index which leads to the same best arm selection for all $M$ agents. Each agent $i$ starts from the common index and privately updates it. For each agent, if an arm is pulled for the $p^{th}$ time consecutively (without switching to any other arms in between), it will also be played for the next $2^p$ time slots. 


\begin{algorithm}
	\caption{DP-Master-worker UCB algorithm}
	\begin{algorithmic}[1]
		
		\State \textbf{Initialization}: Set $t~=~0$ and counter $\eta~=~1$; 
		\State For each arm $j,~1\leq j\leq K$ of each agent $i,~1\leq i\leq M$, instantiate DP mechanisms $H_{i,j}$.
		\State \textbf{Input:} The differential privacy parameter $\epsilon $;
		
		\While {$t \leq T$} 
	    \For {agent $i$ to $M$}	
		\If {$t \leq K$} 
		\State Play arm $a_{i}(t) = t$, observe reward $r_{i,a_{i}(t)}(t)$
		\State Insert $r_{i,a_{i}(t)}(t)$ to the DP mechanism $H_{i,a_{i}(t)}$
		\EndIf
		\If {$\eta = 2^p$ for $ p = 0,1,...$}
		\State Update total reward $s_{i,j}(t)$ using $H_{i,j}$ 
		\State Update $\upsilon _{i,j} = \frac{ 1}{\epsilon}log\frac{1}{\delta}\log^{1.5}n_{i,j}(t)$
		\State Update index $I_{i,j}(t)=X_{i,j}+\sqrt{\frac{2\log t}{n_{i,j}(t)}}+\frac{\upsilon_{i,j}(t)}{n_{i,j}(t)}$
		\State \textit{/*Begin communication phase}
		\State Send index $I_{i,j}(t)$ to the central node
		\State Receive the averaged index $I^{avg}_{j}(t)$ of $j$ arms
		\State \textit{/*End communication phase}
		\State Pull best arm $a^*_{i}(t)={argmax}_jI^{avg}_j(t)$ 
		\If {$a^*_i(t) \neq a^*_i(t-1) $}
		\State Reset $\eta = 1$;
		\EndIf
		\Else\State {$a^*_i(t) = a^*_i(t-1) $}
		\EndIf	
		\State Play arm $a^*_i(t)$, observe the reward $r_{i,a^*_i(t)}(t)$
		\State Insert $r_{i,a^*_i(t)}(t)$ to the DP mechanism $H_{i,a^*(t)}$
		
		\State Update $t = t+1 , \eta=\eta+1$

		\EndFor
		\For {The central node}
		\State \textit{/*Begin communication phase}
		\State Receive index sequence $\left\lbrace I_{1,j}...I_{M,j}\right\rbrace $ of $j$ arms
		\State Compute and return back $I^{avg}_j=\frac{1}{M}\sum_{i=1}^{M}I_{i,j}$
		\State \textit{/*End communication phase}
		\EndFor
		\EndWhile
		
	\end{algorithmic}
\end{algorithm}

We next analyze the algorithm performance. Theorem 1 provides the privacy performance of Algorithm 1.

\begin{lemma}[Privacy error bound]
\label{lemma:errorbound}
The error between the empirical mean $Y_{i,j}$ and private mean $X_{i,j}$ after $n_{i,j}$ times of plays is bounded as $|Y_{i,j}-X_{i,j}| \leq h_{n_{i,j}}$ with probability at least $1-\delta$, where $h_{n_{i,j}}$ is the error incurred by the private mechanism calculated as $h_{n_{i,j}}=\frac{1}{\epsilon}\cdot \log ^{1.5}({n_{i,j}})\cdot \log \frac{1}{\delta}\cdot \frac{1}{n_{i,j}}$.
\end{lemma}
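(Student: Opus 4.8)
The plan is to reduce the claim about means to a statement about the single sum that the hybrid mechanism releases, and then to control the injected Laplace noise by a concentration argument. First I would observe that the non-private and private estimates share the same denominator, $Y_{i,j}=y_{i,j}/n_{i,j}$ and $X_{i,j}=s_{i,j}/n_{i,j}$, so that
\[
|Y_{i,j}-X_{i,j}|=\frac{|y_{i,j}-s_{i,j}|}{n_{i,j}}.
\]
Hence it suffices to show that the total noise $|y_{i,j}-s_{i,j}|$ that the mechanism adds to the true prefix sum is at most $\frac{1}{\epsilon}\log^{1.5}(n_{i,j})\log\frac{1}{\delta}$ with probability at least $1-\delta$; dividing by $n_{i,j}$ then returns exactly $h_{n_{i,j}}$.

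Next I would unpack the tree-based structure of the hybrid mechanism. To release the partial sum over the $n_{i,j}$ rewards inserted so far, the mechanism reads off at most $\lceil\log_2 n_{i,j}\rceil$ internal nodes whose subtrees partition the leaves, so $s_{i,j}-y_{i,j}=\sum_{\ell=1}^{k}Z_\ell$ is a sum of $k\le \log_2 n_{i,j}$ independent Laplace variables. Because every inserted reward lies on a root-to-leaf path of length $\log_2 n_{i,j}$ and therefore contributes to that many node sums, the per-node noise scale must be inflated by the tree depth to keep the whole continual release $\epsilon$-private, so each $Z_\ell\sim Lap(\log(n_{i,j})/\epsilon)$. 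This is precisely where the two logarithmic factors originate: one from the number of nodes being combined and one from the inflated per-node scale.

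I would then apply a concentration inequality for a sum of independent Laplace random variables, of the Bernstein/sub-exponential type used for continual release in \cite{chan2011private}. With $b=\log(n_{i,j})/\epsilon$ and $k\le \log n_{i,j}$ terms, the sum has standard deviation of order $\sqrt{k}\,b$, and its tail gives, with probability at least $1-\delta$, a deviation bounded by a $\log\frac{1}{\delta}$ multiple of $\sqrt{k}\,b$. Substituting yields $\sqrt{\log n_{i,j}}\cdot\frac{\log n_{i,j}}{\epsilon}\cdot\log\frac{1}{\delta}=\frac{1}{\epsilon}\log^{1.5}(n_{i,j})\log\frac{1}{\delta}$, matching the claimed numerator bound, after which dividing by $n_{i,j}$ finishes the proof.

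The main obstacle is this concentration step. A naive union bound over the $\log n_{i,j}$ nodes would replace $\sqrt{\log n_{i,j}}$ by $\log n_{i,j}$ and cost an extra logarithmic factor, so I need the genuine sub-exponential tail for the Laplace sum to land the $1.5$ exponent. I also have to track which deviation regime I am in: the sub-Gaussian regime really produces a $\sqrt{\log\frac{1}{\delta}}$ factor, and I must verify that relaxing this to $\log\frac{1}{\delta}$ (together with the sub-exponential regime for very small $\delta$) still gives a valid upper bound of the stated form, and that the per-node scale I assume is exactly the one certifying the privacy guarantee used elsewhere. Everything else—the reduction in the displayed identity and the final division by $n_{i,j}$—is routine.
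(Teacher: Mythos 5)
Your proof is correct, but it does substantially more work than the paper does: the paper's entire proof is a citation. Specifically, the paper invokes its Fact~1 (Corollary~4.8 of Chan--Shi--Song \cite{chan2011private}), which already states that the hybrid mechanism is $\epsilon$-differentially private with released-sum error at most $\frac{1}{\epsilon}(\log t)^{1.5}\log\frac{1}{\delta}$ with probability at least $1-\delta$; it then only adds the observation that each pull feeds exactly one mechanism $H_{i,j}$ (so the guarantee applies per arm after $n_{i,j}$ insertions) and implicitly divides by $n_{i,j}$ to pass from sums to means. What you wrote is, in effect, a reconstruction of the proof of that cited corollary: the decomposition of the prefix sum into at most $\log_2 n_{i,j}$ tree nodes, the per-node noise scale $\log(n_{i,j})/\epsilon$ forced by the depth-wise privacy composition, and the sub-exponential concentration bound for a sum of independent Laplace variables giving $\sqrt{k}\,b\,\log\frac{1}{\delta}$ instead of the lossy union bound --- precisely the content of Lemma~2.8/Corollary~2.9 in \cite{chan2011private}, including your point about relaxing $\sqrt{\log(1/\delta)}$ to $\log(1/\delta)$ across regimes. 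Two caveats if your version is to stand alone: (i) the Laplace concentration bound carries an absolute constant (e.g.\ $\sqrt{8}$ in \cite{chan2011private}) that neither your final line nor the lemma statement retains, so the bound is really valid only up to a constant --- a sloppiness you share with the paper; (ii) what you describe is the binary mechanism within a segment, whereas the hybrid mechanism also adds the logarithmic-mechanism noise for the prefix ending at the last power of two, contributing one extra Laplace term that changes nothing asymptotically. The trade-off is clear: the paper's route is a one-liner that delegates correctness to the reference, while yours is self-contained and makes visible where each of the three logarithmic factors comes from.
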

\begin{proof}
This follows directly from the Fact 1 (Appendix.A\cite{2005.06670}) that the hybrid mechanism remains $(\epsilon,\delta)$-DP after any number $n_{i,j}$ of plays since each time only one arm is pulled, that will affect only one mechanism. 
\end{proof}
\begin{theorem}[Privacy of Algorithm 1]
	Algorithm 1 is $\epsilon$- differential private after $T$ timeslots with $\delta=T^{-4}$. 
\end{theorem}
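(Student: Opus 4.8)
The plan is to reduce the privacy of the whole algorithm to the privacy of a single hybrid mechanism and then invoke the post-processing immunity of differential privacy. The private data are the reward realizations, while the adversary observes the action sequence $a_i(t)$ and the transmitted indices $q_i(t)=\{I_{i,j}(t)\}_j$; the definition in Section~\ref{DP} asks exactly that the joint law of $(a_i(t),q_i(t))$ be insensitive to changing a single reward, so I would verify that property directly.

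First I would establish the reduction to one mechanism. Two neighboring sequences $\mathbf{r}(t)$ and $\mathbf{r}'(t)$ differ in one position, and by construction the reward observed at a given time is inserted into exactly one mechanism $H_{i,a_i(\tau)}$, namely the one for the pulled arm. Hence every mechanism $H_{i,j}$ with $j\neq a_i(\tau)$ receives identical inputs under $\mathbf{r}$ and $\mathbf{r}'$, and only the single mechanism for the played arm sees a changed input. This removes any need for composition across the $K$ arms: the sensitivity of the system to a one-reward change equals that of one mechanism.

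Next I would invoke Lemma~\ref{lemma:errorbound} (equivalently Fact 1 of the referenced appendix): a single hybrid mechanism is $(\epsilon,\delta)$-differentially private over its continual release of the partial sums $s_{i,j}$, the logarithmic tree depth ensuring the $\epsilon$ budget is not inflated by the number of releases over the horizon $T$. The quantities the adversary actually sees, namely $I_{i,j}(t)=X_{i,j}+\sqrt{2\log t/n_{i,j}(t)}+\upsilon_{i,j}(t)/n_{i,j}(t)$, the averaged index $I^{avg}_j$, and the selected arm $a^*_i(t)=\arg\max_j I^{avg}_j(t)$, are all deterministic functions of the private means $X_{i,j}=s_{i,j}/n_{i,j}$ together with data-independent quantities ($t$, the counts $n_{i,j}$, and the other agents' indices, none of which depend on agent $i$'s rewards). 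By the post-processing property the joint output $(a_i(t),q_i(t))$ inherits the $(\epsilon,\delta)$ guarantee of the sums, so each $\pi_i$ meets the definition and the system is $(\epsilon,\delta)$-DP; setting $\delta=T^{-4}$ fixes the accuracy parameter, which is what makes $h_{n_{i,j}}$ scale as $\tfrac{1}{\epsilon}\log^{1.5}(n_{i,j})\cdot 4\log T\cdot\tfrac{1}{n_{i,j}}$ and keeps the total failure probability over the at most $T$ plays at $O(T^{-3})$.

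The step I expect to be the main obstacle is justifying rigorously that the $\epsilon$ budget does not degrade over the full horizon, and two points need care. First, the counts $n_{i,j}$ and the UCB and privacy bonuses must be treated as public, i.e. data-independent given the observed action history, so that the index is genuinely a post-processing of the private sum; this is standard but should be stated. Second, and more delicately, the action at time $t$ determines which reward is next inserted into which mechanism, so the map from rewards to the observed trajectory is adaptive. Handling this requires the continual-observation guarantee of the hybrid mechanism, which is robust to adaptively chosen queries and releases the whole sum stream at total cost $\epsilon$, rather than a naive per-round composition whose cost would grow with $T$; it is this continual-release property, together with the one-mechanism reduction, that yields the clean $\epsilon$ (not $\epsilon\log T$) in the statement.
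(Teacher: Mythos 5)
Your proposal is correct and follows essentially the same route as the paper's own proof: you reduce to a single hybrid mechanism (since each pull inserts a reward into exactly one mechanism, a one-position change in the reward sequence touches only one $H_{i,j}$), invoke the hybrid mechanism's differential-privacy guarantee (the paper's Lemma~1 / Fact~1), and conclude by the post-processing property that the indices, averaged index, and selected actions inherit the guarantee. Your additional care about the adaptivity of which mechanism receives each reward and the public nature of the counts $n_{i,j}$ is a more explicit treatment of points the paper leaves implicit, but it is elaboration of the same argument rather than a different one.
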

\begin{proof}	
 Proposition 2.1 of \cite{dwork2014algorithmic} proves the \textit{post-processing} property of DP mechanisms: the composition of a mapping $f$ with an $(\epsilon, \delta)$- differentially private algorithm is also $(\epsilon, \delta)$ differentially private. Using Lemma~1, the hybrid mechanism is $(\epsilon, \delta)$- differentially private. Moreover, our Algorithm 1 can be seen as a mapping from the averaged output of the hybrid mechanism to the action. This completes our proof.
\end{proof}


Theorem 2 gives the regret of  Algorithm 1. Here we only give a proof sketch,  the complete proof can be found in Appendix.C\cite{2005.06670}. 
\begin{theorem}[Regret of Algorithm 1]
\label{thm:1}
	The learning regret of Algorithm 1 is
\begin{equation}
	\begin{array}{ll}
R^C(T)\leq MK\Delta_{max}
(4+\max[(\frac{8\log{T}}{\epsilon(1-\beta_0) \Delta_{min}} )^{2.25}, \left \lceil \frac{8\log{T}}{\Delta_{min}^2 \beta_0^2} \right \rceil]\notag
\end{array}
\end{equation}
for some $0< \beta_0< 1$, where $\Delta_{max}=\max\left\lbrace \Delta_j\right\rbrace $, $\Delta_{min}=\min\left\lbrace \Delta_j\right\rbrace $, $\epsilon$ is the parameter for $(\epsilon,\delta)$ privacy, $\delta=T^{-4}$.
\end{theorem}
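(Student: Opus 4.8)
The plan is to follow the classical UCB regret decomposition and to control, for each agent $i$ and each suboptimal arm $j$, the expected number of plays $\mathbb{E}[n_{i,j}(T)]$, then sum. Since $R^C(T)=\sum_{j=2}^K\Delta_j\,\mathbb{E}[n_j(T)]=\sum_{j=2}^K\Delta_j\sum_{i=1}^M\mathbb{E}[n_{i,j}(T)]$, a per-agent-per-arm bound of the form $\mathbb{E}[n_{i,j}(T)]\le 4+\max[\cdots]$ immediately yields the theorem after replacing every $\Delta_j$ by $\Delta_{max}$ and counting the $M(K-1)\le MK$ suboptimal (agent, arm) pairs.

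First I would bound the index. By Lemma~\ref{lemma:errorbound}, with probability at least $1-\delta$ we have $|Y_{i,j}-X_{i,j}|\le h_{n_{i,j}}=\upsilon_{i,j}/n_{i,j}$. Because Line~12 adds exactly the privacy bonus $\upsilon_{i,j}/n_{i,j}=h_{n_{i,j}}$, the privacy error is absorbed: on this event the optimal arm obeys $I_{i,1}(t)\ge Y_{i,1}+\sqrt{2\log t/n_{i,1}}$, while any suboptimal arm obeys $I_{i,j}(t)\le Y_{i,j}+\sqrt{2\log t/n_{i,j}}+2h_{n_{i,j}}$. Combining this with the Hoeffding concentration $|Y_{i,j}-\mu_j|\le\sqrt{2\log t/n_{i,j}}$ (which holds outside a probability-$t^{-4}$ event), the averaged-index selection rule $I^{avg}_j(t)\ge I^{avg}_1(t)\ge\mu_1$ forces $2[\sqrt{2\log t/n_{i,j}}+h_{n_{i,j}}]\ge\Delta_j$ for the arm that gets pulled.

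Next I would split the gap using the free parameter $\beta_0$: the displayed inequality can hold only if either $2\sqrt{2\log t/n_{i,j}}\ge\beta_0\Delta_j$ or $2h_{n_{i,j}}\ge(1-\beta_0)\Delta_j$. The first fails once $n_{i,j}>8\log T/(\beta_0^2\Delta_j^2)$, producing the ceiling term. For the second, substituting $\delta=T^{-4}$ so that $\log(1/\delta)=4\log T$ turns the condition into $\log^{1.5}n_{i,j}/n_{i,j}<\epsilon(1-\beta_0)\Delta_j/(8\log T)$; bounding $\log^{1.5}n\le n^{5/9}$ for large $n$ gives $n^{-4/9}<\epsilon(1-\beta_0)\Delta_j/(8\log T)$, so the arm is frozen once $n_{i,j}>(8\log T/(\epsilon(1-\beta_0)\Delta_j))^{9/4}$ — exactly the $2.25$-exponent term, since $1/(1-5/9)=9/4$. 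Taking the maximum of the two thresholds and using $\Delta_j\ge\Delta_{min}$ yields the $\max[\cdots]$ factor.

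Finally, following the Auer \textit{et al.} template, I would write $\mathbb{E}[n_{i,j}(T)]\le \ell+\sum_{t}\Pr[\text{arm }j\text{ pulled},\,n_{i,j}(t)>\ell]$ with $\ell$ the larger threshold above, and bound the residual sum by the additive constant $4$: the Hoeffding failure events contribute $\sum_t\sum_{s\le t}t^{-4}\le\pi^2/3<4$ after a union bound over candidate counts, while the Lemma~\ref{lemma:errorbound} privacy-failure events contribute at most $T\delta=T^{-3}$, which is negligible — this is precisely why $\delta=T^{-4}$ is chosen. The main obstacle is the joint handling of the averaged index together with the privacy term: one must verify that the per-agent Hoeffding and Lemma~\ref{lemma:errorbound} events, unioned across the $M$ synchronized agents, still sum to an $O(1)$ constant, and that the nonstandard estimate $\log^{1.5}n\le n^{5/9}$ is valid over the operative range of $n$ so that the clean $9/4$ exponent genuinely dominates the transcendental privacy threshold.
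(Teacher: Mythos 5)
Your proposal follows essentially the same route as the paper's own proof: the same per-agent-per-arm decomposition of the regret, the same three-event UCB argument adapted to the averaged index (bad concentration of arm $1$, bad concentration of arm $j$, or the gap condition), the same $\beta_0$ split of $\Delta_j$ into a sampling threshold (giving the ceiling term) and a privacy threshold (giving the $2.25$-power term), and the same choice $\delta=T^{-4}$. One place where you genuinely improve on the paper: the paper solves the transcendental condition $n_{i,j}\gtrsim \beta_1\log^{1.5}n_{i,j}$ by invoking the Lambert $W$ function and then writing an approximation ``$\approx\beta_1^{2.25}$'', which is not a rigorous bound, whereas your elementary estimate $\log^{1.5}n\le n^{5/9}$ (which indeed holds for all $n>1$; the worst case is near $n=e^{27/10}$, where it holds with a small margin) yields the exponent $9/4=2.25$ rigorously. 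That is a cleaner and more defensible derivation of the headline exponent.

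There is, however, one genuine gap: you analyze Algorithm 1 as if the index comparison were made at every round, but the algorithm only updates and compares indices when the counter $\eta$ hits a power of two; between communication rounds each agent replays its previous arm, so an arm selected for the $p$-th consecutive time is then played for a block of $2^p$ slots. Consequently, in your decomposition $\mathbb{E}[n_{i,j}(T)]\le \ell+\sum_t \Pr[\text{arm } j \text{ pulled at } t,\ n_{i,j}(t)>\ell]$, the pulls occurring in the middle of a block are not witnessed by any index event at that time $t$, so they cannot be bounded by the Hoeffding/privacy failure probabilities as you claim. The paper's proof handles this by decomposing over the switch times $\tau_{i,j}(m)$ and weighting the comparison events by the block lengths $2^p$; the fast $(m+2^p)^{-4}$ decay absorbs the $2^p$ factors and still produces the additive constant $4$. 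Your argument can be repaired either by adopting that switch-time decomposition, or by relating each mid-block pull to the most recent comparison time, at which point $n_{i,j}$ is at least half its current value since a block can at most double the count; but the latter route inflates the thresholds by a factor of $2$, so the stated constants would need rechecking. As written, your residual computation $\sum_t\sum_{s\le t}t^{-4}\le \pi^2/3<4$ presumes per-round comparisons and does not cover the block structure of the algorithm.
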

\textit{Proof outline.} The regret incurred during the time horizon $T$ is caused by playing suboptimal arms. We first bound the amount of error between the private and empirical means that are caused by the DP mechanism. Using this bound and Lemma~\ref{lemma:errorbound}, we estimate the number of times that we play suboptimal arms. We show that after a sufficient number of times $O(\frac{MK\log^{1.5}T}{\epsilon\Delta^2_{min}})$, a suboptimal arm will not be selected with high probability. 

\textit{Remark:} Through the central mode we obtain $O(MK\log^{2.25}(T))$ regret. 
The DP mechanism mainly increases the exploration rounds. If we do not use the DP mechanism, the $O(\log^{2.25}T)$ term vanishes. We note that after $\frac{8\log T}{\Delta_{min}^2}$  plays, the suboptimal arms will be selected with low probability, and we can achieve
a $O(MK\log T)$ regret. Note that in Theorem \ref{thm:1}, the parameter $\epsilon$ reflects the trade off between privacy and regret, where the privacy increases as $\epsilon$ decrease.

\subsection{DP-Decentralized UCB algorithm}

\begin{algorithm}
	\caption{DP-Decentralized UCB algorithm}
	\begin{algorithmic}[1]  
		
		\State \textbf{Initialization}: Set $t=0$;  $\hat{\mathbf{n}}_j(0)=\left\lbrace \hat{n}_{1,j}(0),...,\hat{n}_{M,j}(0)\right\rbrace $, $\hat{\mathbf{s}}_j(0)=\left\lbrace\hat{s}_{1,j}(0),...,\hat{s}_{M,j}(0)\right\rbrace$
		\State For each arm $j,~1\leq j\leq K$ of each agent $i,~1\leq i\leq M$, instantiate DP Mechanisms $H_{i,j}$.
		\State \textbf{Input:} The differential privacy parameter $\epsilon $; matrix $P$ represents the network structure; $\rho\geq1$;

		\While {$t \leq T$} 
		\For {agent $i$ to $M$}
		\If {$t \leq K$} 
		\State play arm $a_{i}(t) = t$, observe the reward $r_{a_{i}(t)}(t)$
		\State Insert $r_{a_{i}(t)}(t)$ to the DP mechanism $H_{i,a_{i}(t)}$
		\Else 
		
		\State \textit{/*Begin the communication phase}
		\State Update the estimated play numbers:
		\State $\hat{\mathbf{n}}_j(t)=P\hat{\mathbf{n}}_j(t-1)+P\mathbf{\eta}_j(t-1)$ 
		\State Update the additional private error term:
		\State $\hat{\upsilon }_{j,j}(t) = \frac{ 1}{\epsilon}\log\frac{1}{\delta}\log^{1.5}\hat{n}_{i,j}(t)$
		\State Update the estimated total rewards: 
		\State $\hat{\mathbf{s}}_j(t)=P{\hat{\mathbf{s}}}_j(t-1)$ 
		\State \textit{/*End the communication phase.}
		\State Update the arm index :
		\State $I_{i,j}(t)=\hat{{X}}_{i,j}
		+\sqrt{2\rho \frac{\hat{n}_{i,j}(t)+c_i}{M\hat{n}_{i,j}(t)}\cdot \frac{\log t}{\hat{n}_{i,j}(t)}}
		+\frac{\hat{\upsilon} _{i,j}(t)}{\hat{n}_{i,k}(t)}$
		\State Select the best arm 
		$a_{i}(t)={argmax}_jI_{i,j}(t)$
		\State Observe the reward $r_{a_{i}(t)}(t)$
		\State Insert $r_{a_{i}(t)}(t)$ to the DP mechanism $H_{i,a_{i}(t)}$
		\State Update $s_{i,a_i(t)}(t)$ using DP mechanism $H_{i,a_i(t)}$

		\State $t=t+1$
			
		\EndIf
		\EndFor	
		\EndWhile
		
	\end{algorithmic}
\end{algorithm}

In Algorithm 2, the agents average their model with their neighbors’ models at each time $t$, instead of aggregating their values with the help of a central node. We assume that each agent maintains a  bi-directional communication with a set of neighboring agents. We consider Gaussian distributions for each arm's reward, i.e., the reward at arm $j$ is sampled from a Gaussian distribution with mean $\mu_{i,j}$ and variance $\sigma^2$. We assume that the variance $\sigma^2$ is known and is the same at each arm. We use a \textit{consensus algorithm} that captures the effect of the additional private information an agent receives through communication with other agents. We represent the network as a graph where nodes are agents and edges connect neighboring agents. A discrete-time consensus algorithm can be expressed as:
\begin{equation}
\mathbf{x}(t+1)=P\mathbf{x}(t),
\end{equation}
where $x$ is the quantity we want the agents to agree on, and $P$ is a row stochastic matrix given by
\begin{equation}
P = \mathit{I}_M -\frac{\kappa}{d_{max}}L.
\end{equation}
Here, $\mathit{I}_M$ is the identity matrix with order M, $d_{max}=\max_i \deg(i),i\in \left\lbrace 1,...,M\right\rbrace $ and $\deg(i)$ is the degree of agent $i$. $\kappa\in[0,1]$ is a step size parameter and $L$ is the Laplacian matrix of this communication graph. Without loss of generality, we assume that the eigenvalues of $P$ are ordered as $\lambda_1=1\ge\lambda_2\geq...\geq\lambda_M\ge-1$.

For our federated private MAB problem, we use the following definitions, that are similar to the definitions in Algorithm 1. Let $\hat{s}_{i,j}$ be the estimated total private reward, $\hat{y}_{i,j}$ be the estimated total true reward of arm $j$ at agent $i$, and $\hat{n}_{i,j}$ be the estimated total number of times that the arm $j$ has been played by agent $i$.
Let $\hat{{X}}_{i,j} = {\hat{s}_{i,j}}/{\hat{n}_{i,j}}$ be the estimated private mean, and $\hat{{Y}}_{i,j} = {\hat{y}_{i,j}}/{\hat{n}_{i,j}}$ be the estimated empirical mean.

Without taking into account differential privacy, the consensus algorithm will update $\hat{y}_{i,j}$ and $\hat{n}_{i,j}$ as follows:
\begin{eqnarray}
\label{eq9}\mathbf{\hat{n}}_{j}(t+1)=P\mathbf{\hat{n}}_{j}(t)+P\mathbf{\xi}_j(t)\\
\label{eq10}\mathbf{\hat{y}}_{j}(t+1)=P\mathbf{\hat{y}}_{j}(t)+P\mathbf{r}_j(t),
\end{eqnarray}
where ${\xi}_{i,j}(t)=I(a_i(t)=j)$, indicating if arm $j$ is played by agent $i$ at time slot $t$; $r_{i,j}(t)$ is the reward with respect to the action which is generated by the distribution $N(\mu_j,\sigma^2)$. $\mathbf{\hat{n}}_{j}(t),\mathbf{\xi}_j(t), \mathbf{\hat{y}}_{j}(t), \mathbf{r}_j(t)$ are vectors that connect the values $\hat{n}_{i,j}(t), \xi_{i,j}(t), \hat{y}_{i,j}(t), r_{i,j}(t)$ for $i= 1,...,M$ respectively. We note that under our DP mechanism, an agent can not observe the reward sequences. Thus, we use the following equation to update the private total rewards instead of \eqref{eq10}:
\begin{equation}
\mathbf{\hat{s}}_{j}(t+1)=P\mathbf{\hat{s}}_{j}(t).
\end{equation}
The above equation captures the fact that only the private total reward $\mathbf{\hat{s}}_{j}(t)$ can be broadcasted through the network graph, not $\mathbf{r}_j(t)$. We still keep \eqref{eq9}  because we only aim to keep the reward values private and not the numbers $\mathbf{\hat{n}}_{j}$. 

Each arm $j$ of each agent $i$ uses the analogous DP mechanisms $H_{i,j}$ in the the Algorithm 1 to maintain a private total reward. The communication phase occurs at each timeslot to update the estimate play numbers $\hat{n}_{i,j}(t)$ and the total reward $\hat{s}_{i,j}(t)$ using (5) (7). Agent $i$ selects the arm with the maximum index denoted as:
\begin{equation}
I_{i,j}(t)=\hat{{X}}_{i,j}
+\sqrt{2\rho \frac{\hat{n}_{i,j}(t)+c_i}{M\hat{n}_{i,j}(t)}\cdot \frac{\log t}{\hat{n}_{i,j}(t)}}
+\frac{\hat{\upsilon} _{i,j}(t)}{\hat{n}_{i,k}(t)},
\end{equation} where $c_0,c_i$ are parameters representing the network stricture and $\rho>1$ is the exploration parameter. 
From (8) we notice that the estimation performance, the network structure, and the exploration parameter, all affect the learning performance.

\begin{theorem}[Privacy of Algorithm 2]
	 Algorithm 2 is $(\epsilon,\delta)$- differentially private after $T$ timeslots with $\delta = \frac{1}{2}T^{-\rho}$.
\end{theorem}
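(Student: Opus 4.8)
The plan is to mirror the argument used for Theorem 1, reducing the privacy of the whole decentralized protocol to the privacy of the per-agent hybrid mechanisms together with the post-processing property. The only genuinely new ingredients are (a) that the inter-agent communication now circulates the private sums through the consensus matrix $P$ rather than through a one-shot central averaging step, and (b) that the failure probability $\delta$ must be tied to the exploration parameter $\rho$ rather than fixed at $T^{-4}$.

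First I would fix an agent $i$ and invoke Lemma~\ref{lemma:errorbound}: each hybrid mechanism $H_{i,j}$, which tracks the private running sum $\hat{s}_{i,j}$ of arm $j$'s reward stream, is $(\epsilon,\delta)$-differentially private with respect to that agent's reward sequence, and this guarantee holds for the entire stream up to time $T$ because the tree-based hybrid mechanism is a continual-observation mechanism. Since at each slot only a single arm is pulled, only one mechanism receives a new input, so neighboring reward sequences differ in exactly one leaf of one tree. Next I would observe that the only reward-dependent quantity agent $i$ ever transmits is the private sum $\hat{s}_{i,j}$; the play counts $\hat{n}_{i,j}$ are, by design, not treated as private. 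I would then argue that every subsequent operation is post-processing: the private-sum consensus update $\hat{\mathbf{s}}_j(t+1)=P\hat{\mathbf{s}}_j(t)$ is a deterministic linear map applied to DP outputs, the private mean $\hat{X}_{i,j}=\hat{s}_{i,j}/\hat{n}_{i,j}$ and the index $I_{i,j}(t)$ are deterministic functions of these sums and of the non-private counts, and the action $a_i(t)=\arg\max_j I_{i,j}(t)$ is a further deterministic map. By Proposition 2.1 of \cite{dwork2014algorithmic}, composing any map with an $(\epsilon,\delta)$-DP mechanism preserves $(\epsilon,\delta)$-DP, so the joint output $(a_i(t),q_i(t))$ required by Definition~1 inherits the guarantee.

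To pin down $\delta=\tfrac{1}{2}T^{-\rho}$, I would make the noise level of the hybrid mechanism explicit. The Laplace noise added along the $O(\log n)$ covering nodes produces an error exceeding the bound $h_{n_{i,j}}$ of Lemma~\ref{lemma:errorbound} only with probability at most $\delta$; choosing the scale so that this tail equals $\tfrac{1}{2}T^{-\rho}$ matches the privacy failure probability to the confidence level of the Gaussian UCB index, whose width $\sqrt{2\rho(\hat{n}_{i,j}+c_i)\log t/(M\hat{n}_{i,j}^2)}$ carries the same $\rho$-dependent sub-Gaussian tail. A union bound over the $T$ per-slot failure events is summable precisely because $\rho>1$, the regime assumed by Algorithm 2, so the single parameter $\rho$ consistently governs both the exploration and the privacy budget.

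The main obstacle I anticipate is justifying the post-processing step in the genuinely decentralized, iterated setting: unlike the one-shot averaging of Algorithm 1, the consensus recursion repeatedly re-mixes and re-broadcasts private sums across the network, so I must argue that no round injects fresh information about agent $i$'s raw rewards beyond what $H_{i,j}$ has already released. The clean way to do this is to note that equation (7) propagates only the already-privatized sums $\hat{\mathbf{s}}_j$ (never the raw vector $\mathbf{r}_j$, which is why the update differs from \eqref{eq10}), so the entire trajectory $\{\hat{\mathbf{s}}_j(t)\}_{t\le T}$ is a fixed function of the initial mechanism outputs and the public matrix $P$; continual-observation privacy then covers the whole trajectory and post-processing does the rest. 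A secondary point to verify carefully is that keeping the counts $\hat{n}_{i,j}$ public is consistent with Definition~1, which protects only the reward \emph{values}: since neighboring sequences share the same action and count pattern, the counts carry no additional information about the single differing reward entry.
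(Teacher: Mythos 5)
Your proposal is correct and takes essentially the same route as the paper: the paper's proof of this theorem is literally ``similar to that of Theorem 1,'' and Theorem 1 is proved exactly as you argue --- the hybrid mechanisms are $(\epsilon,\delta)$-differentially private (Lemma~\ref{lemma:errorbound}), and everything downstream (the consensus propagation $\hat{\mathbf{s}}_j(t+1)=P\hat{\mathbf{s}}_j(t)$ of already-privatized sums, the index computation, and the arm selection) is post-processing in the sense of Proposition 2.1 of \cite{dwork2014algorithmic}, with $\delta=\frac{1}{2}T^{-\rho}$ chosen to match the confidence level used in the regret analysis. Your extra care about the iterated re-mixing of private sums and the public play counts fills in details the paper leaves implicit, but it is the same argument, not a different one.
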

The proof of Theorem 3 is similar to that of Theorem 1.

\begin{theorem}[Regret of Algorithm 2]
	The learning regret of Algorithm 2 is
\begin{equation}\label{eq17}
\begin{array}{ll}
R^D(T) \leq \frac{2MK\rho\Delta_{max}}{\rho -1}+\sum_{i=1}^{M}\sum_{j>1}^{K}\max[(\frac{2+2\rho\log{T}}{\epsilon(1-\beta_0) }) ^{2.25}, \notag\\
\left \lceil\frac{c_0}{\beta_0^2}+\frac{8\sigma^2\rho(1+c_i)\log T}{\beta_0^2\Delta_{j}} \right \rceil]
\end{array}
\end{equation}	
for some $0< \beta_0 < 1, ~\rho\geq1$, where $\Delta_{max}=\max\left\lbrace \Delta_j\right\rbrace $, $\Delta_{min}=\min\left\lbrace \Delta_j\right\rbrace $ and $\epsilon$ is the parameter for $(\epsilon,\delta)$ privacy, $\delta = \frac{1}{2}T^{-\rho}$ and $c_i$, $c_0$ are parameters of the network graph.
\end{theorem}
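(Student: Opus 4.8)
The plan is to follow the standard UCB regret decomposition, writing $R^D(T)=\sum_{j>1}\Delta_j\,\mathbb{E}[n_j(T)]=\sum_{i=1}^{M}\sum_{j>1}\Delta_j\,\mathbb{E}[n_{i,j}(T)]$ and bounding the expected number of times each agent $i$ pulls each suboptimal arm $j$. Fixing $i$ and $j$, I would argue that agent $i$ selects arm $j$ at time $t$ only if $I_{i,j}(t)\ge I_{i,1}(t)$. Decomposing this event, a suboptimal pull forces at least one of: the optimal arm's index underestimating $\mu_1$, the suboptimal arm's private mean overestimating $\mu_j$ by more than its confidence width, or the confidence width of arm $j$ still being too large because $\hat n_{i,j}(t)$ has not yet crossed a threshold. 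The first two are \emph{confidence-failure} events whose probabilities are summed over time to produce the constant term, while the third fixes the leading $\log T$ term and determines the threshold inside the $\max[\cdot,\cdot]$.

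The width of $I_{i,j}$ has three ingredients, and the core of the proof is to control each. First, because the algorithm selects on the private mean $\hat X_{i,j}$ rather than the empirical mean $\hat Y_{i,j}$, I would invoke Lemma~\ref{lemma:errorbound} to absorb the gap $|\hat Y_{i,j}-\hat X_{i,j}|\le \hat\upsilon_{i,j}/\hat n_{i,j}$, which is exactly the last term of the index; requiring this privacy error to fall below a $(1-\beta_0)$ fraction of the effective width and solving the resulting fixed-point inequality of the form $n\gtrsim \tfrac{1}{\epsilon}\log\tfrac1\delta\,\log^{1.5}n$, with $\log\tfrac1\delta=\log(2T^\rho)\le 1+\rho\log T$, yields the $(\tfrac{2+2\rho\log T}{\epsilon(1-\beta_0)})^{2.25}$ branch of the maximum (the exponent $2.25$ arising from bounding $\log^{1.5}n$ by a suitable small power of $n$). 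Second, for the statistical fluctuation I would replace the Hoeffding bound of the bounded-reward case by a sub-Gaussian tail for the $N(\mu_j,\sigma^2)$ rewards, but with the crucial modification that the effective number of independent samples feeding agent $i$'s estimate is governed by the network-averaged count; this is what produces the factor $\tfrac{\hat n_{i,j}+c_i}{M\hat n_{i,j}}$ inside the square root and, after requiring this part of the width to be below a $\beta_0$ fraction of $\Delta_j$, gives the $\lceil \tfrac{c_0}{\beta_0^2}+\tfrac{8\sigma^2\rho(1+c_i)\log T}{\beta_0^2\Delta_j}\rceil$ branch.

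For the constant term I would bound the summed confidence-failure probabilities. Once the consensus and privacy errors are controlled, the $2\rho\log t$ factor in the exploration term forces each failure probability to decay like $t^{-\rho}$ up to network constants; summing over $t$ via $\sum_{t\ge1}t^{-\rho}\le \tfrac{\rho}{\rho-1}$ for $\rho>1$, and over the $M$ agents and $K$ arms each contributing at most $\Delta_{max}$ of regret, produces the $\tfrac{2MK\rho\Delta_{max}}{\rho-1}$ term. This is also where the exploration parameter $\rho>1$ becomes essential, since it is what makes the tail series converge.

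The hardest step will be the consensus analysis justifying the replacement of the local estimates $\hat n_{i,j}(t)$ and $\hat s_{i,j}(t)$ by the global averages $n_j(t)/M$ and $s_j(t)/M$. Writing the recursions for $\mathbf{\hat{n}}_j$ and $\mathbf{\hat{s}}_j$ in closed form as weighted sums $\sum_\tau P^{t-\tau}(\cdot)$, the deviation of agent $i$'s estimate from the network average is controlled by $\|P^{t-\tau}-\tfrac1M\mathbf{1}\mathbf{1}^{\top}\|$, which decays geometrically at the rate $\max(|\lambda_2|,|\lambda_M|)$ set by the spectral gap of $P$. I expect the constants $c_i$ and $c_0$ to emerge precisely from summing this geometric series, so that $c_i$ quantifies the per-agent consensus bias and $c_0$ a network-wide slack; the delicate part is that these deviation bounds must hold \emph{simultaneously} with the privacy bound of Lemma~\ref{lemma:errorbound} and the Gaussian concentration bound, since all three enter the same confidence interval, and the effective variance reduction from averaging $M$ agents must be disentangled from the consensus error it is coupled with.
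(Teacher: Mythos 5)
Your proposal is correct and follows essentially the same route as the paper's proof: the same decomposition into two confidence-failure events plus a gap event, the same $\beta_0/(1-\beta_0)$ split of $\Delta_j$ with the privacy term handled by the fixed-point inequality $n\gtrsim\frac{1}{\epsilon}\log\frac{1}{\delta}\log^{1.5}n$ (giving the exponent $2.25$) and the Gaussian-tail term handled via the consensus-corrected variance of Lemma 2, and the same $\sum_t t^{-\rho}\le\frac{\rho}{\rho-1}$ summation for the constant term. The consensus analysis you flag as the hardest step is exactly the content of the paper's Lemma 2, proved there by the eigendecomposition of $P$ and geometric summation over $|\lambda_p|$, yielding the constants $c_0$ and $c_i$ just as you anticipate.
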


\textit{Proof outline}. The regret is mainly caused by the estimated variance due to communication and the privacy requirements. By using Lemma 1 and Lemma 2 (provided in Appendix.B\cite{2005.06670}), we first bound the amount of error between the estimated private mean and empirical mean.  We note that the communication cost is also reflected in this bound. Using this bound, we estimate the number of times  suboptimal arms are selected, and complete the proof. The complete proof can be found in Appendix.D\cite{2005.06670}.

\textit{Remark:} From Theorem 4 we obtain $O(MK\log^{2.25}T)$ regret. Both the communication and the privacy mechanism result in an  expansion of the exploration phase. 
The DP mechanism leads to an additional $O(\log^{2.25}T)$ regret with the parameter $\epsilon$ inversely proportional to the regret. The federated learning setup introduces constants $c_0$ and $c_i$ into the regret which depend on the network topology. In particular, $c_0$ is proportional to the network scale and $c_i$ depends on the number of neighbors of agent $i$. The sparser the network connection, the larger the $c_i$ and the regret. A larger exploration parameter $\rho$ also implies more exploration rounds. 

\section{Experiments}
In this section, we mainly perform numerical simulations to verify and analyze the performance of Algorithm 2. We choose $M=20$ and $K=10$. 
The 20 agents are connected according to a \textit{cycle graph} which is a fully decentralized setting. 
Figure 1 shows the impact of varying the privacy parameter $\epsilon$ in $\{1.5,2,5\}$ with fixed $\rho=2$. We can see that the regret increases with $\epsilon$. Figure 2 shows the impact of varing the exploration parameter $\rho$ in $\{1.2,2,4\}$ with fixed $\epsilon = 2$. Again as expected the regret increases with $\rho$. These results demonstrate the tradeoff between the regret (recommendation accuracy) and privacy.
\begin{figure}[htbp]
	\centering
	\subfigure[Regret as a function privacy parameter $\epsilon$.]{
		\includegraphics[width=3.8cm]{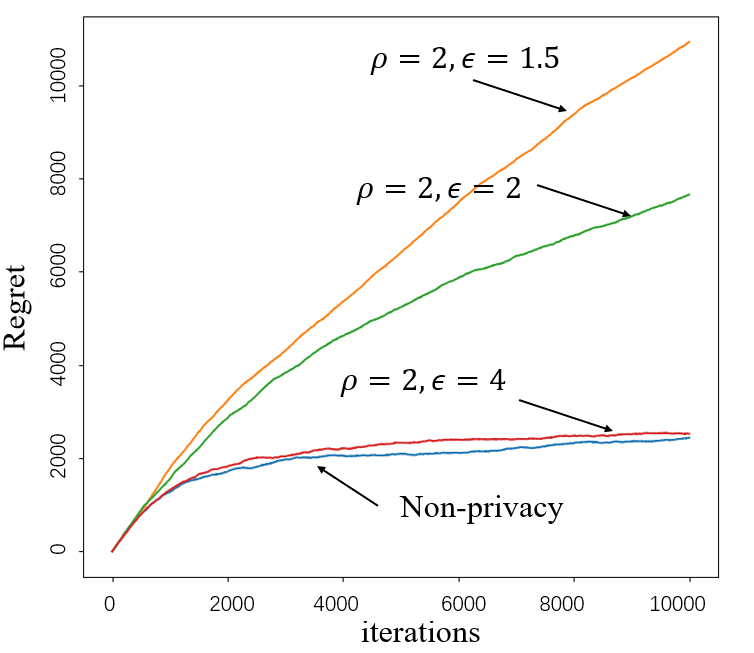}
	}
	\quad
	\subfigure[Regret as a function of exploration parameter $\rho$.]{
		\includegraphics[width=4.1cm]{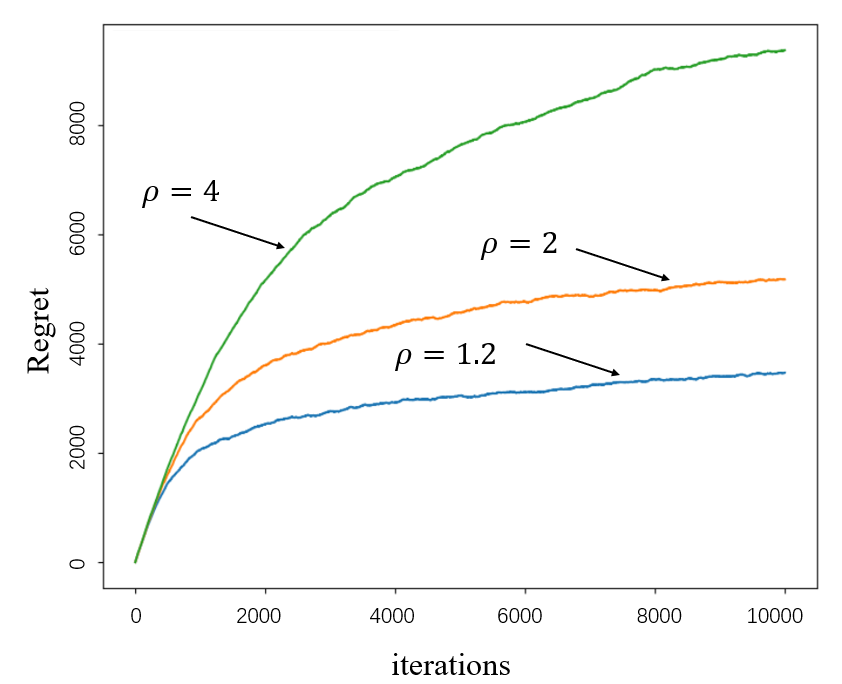}
	}
	\caption{Regret performance of Algorithm 2.}
\end{figure}
%

\section{Conclusion}

In this paper, we proposed a distributed MAB framework for recommendation systems that incorporates differential privacy. At each distributed agent, we use an ($\epsilon,\delta$) differentially private variant of UCB scheme to ensure that agents do not reveal information on the reward values. We designed algorithms for two multi-agent settings: the DP-Master-worker UCB algorithm and the DP-Decentralized UCB algorithm each capturing a different communication network connecting agents. We analyzed both the privacy and regret performance and showed how the need for communication and privacy can influence the decision making performance of the agents.

\newpage
\bibliography{isit2020}
\section{Appendix}
\subsection{Useful facts}
\textbf{Fact 1} \textit{(Hybrid Mechanism, Corollary 4.8 in \cite{chan2011private}) 
	The hybrid mechanism is $\epsilon$- differential private and has an error bounded with probability at least $1-\delta$ by $\frac{1}{\epsilon}\cdot (\log~t)^{1.5}\cdot \log \frac{1}{\delta}$ at time $t$. }

\textbf{Fact 2} \textit{(Chernoff-Hoeffding bound)
Let $X_1,...,X_t$ be a sequence of real-valued random variables with common range $[0,1]$, and such that $\mathbb{E}[X_t|X_1,...,X_{t-1}]=\mu$. Let $S_t = \sum_{i=1}^{t}X_t$. Then for all $a\geq 0$,}
\begin{equation}
P(S_t\geq t\mu ~ +~a)~\leq~e^{-2a^2/t},P(S_t\leq t\mu ~ +~a)~\leq~e^{-2a^2/t}\notag
\end{equation}

\subsection{Lemmas}

\begin{lemma} [Estimated Performance of Algorithm 2]	\quad\quad \quad \quad
	i) The estimated number of plays satisfies: 
	$$n_{j}^{\text{avg}}(t)-c_0 \leq\hat{n}_{i,j}(t)\leq n_{j}^{\text{avg}}(t)+c_0;$$
	ii) $\hat{ {Y}}_{i,j} $ is an unbiased estimation with
	$\mathbb{E}[\hat{ {Y}}_{i,j}(t)]=\mu_j$;
	\\iii) The variance of $\hat{ {Y}}_{i,j}(t)$ is bounded by :
	$$\text{Var}[\hat{ {Y}}_{i,j}(t)]\leq\frac{\hat{n}_{i,j}(t)+c_i}{M\hat{n}_{i,j}(t)^2}\sigma^2;$$
	iv) The error between $\hat{ {Y}}_{i,j}(t)$ and $\hat{ {X}}_{i,j}(t)$ can be bounded by: $$h_{\hat{n}_{i,j}(t)}=\frac{1}{\epsilon}\cdot \log ^{1.5}({\hat{n}_{i,j}})\cdot \log \frac{1}{\delta}\cdot \frac{1}{\hat{n}_{i,j}}$$
	\\	
	where $n_{i,j}^{\text{avg}}(t)=\frac{1}{M}\sum_{\tau=1}^{t}\mathbf{1}_M^T\mathbf{\xi}_j(\tau)$ is the total number of times arm $j$ has been pulled by agent $i$ up to time $t$.
\end{lemma}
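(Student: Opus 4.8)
The plan is to unroll the two consensus recursions \eqref{eq9} and the private update so that every estimated quantity becomes a weighted sum over past plays, with weights given by powers of the doubly-stochastic matrix $P$. Writing $w^{(t,\tau)}_{i,i'} = [P^{t-\tau}]_{i,i'}$, I would obtain $\hat{n}_{i,j}(t) = \sum_{\tau}\sum_{i'} w^{(t,\tau)}_{i,i'}\,\xi_{i',j}(\tau)$ and $\hat{y}_{i,j}(t) = \sum_{\tau}\sum_{i'} w^{(t,\tau)}_{i,i'}\,\xi_{i',j}(\tau)\,g_{i',j}(\tau)$, where $g_{i',j}(\tau)\sim N(\mu_j,\sigma^2)$ is the fresh reward drawn when agent $i'$ plays arm $j$. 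The single structural fact I lean on throughout is the spectral decomposition of $P$: since $P = I_M - \frac{\kappa}{d_{\max}}L$ is symmetric and doubly stochastic, $[P^k]_{i,i'} = \frac1M + e^{(k)}_{i,i'}$ with $|e^{(k)}_{i,i'}| \le \lambda^k$, where $\lambda = \max(|\lambda_2|,|\lambda_M|) < 1$ is fixed by the network topology.

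For part (i) I would use double stochasticity, which conserves total mass: $\mathbf{1}_M^T \hat{\mathbf{n}}_j(t) = \sum_\tau \mathbf{1}_M^T \xi_j(\tau)$, so the across-agent average of $\hat n_{i,j}(t)$ is exactly $n_j^{\text{avg}}(t)$. The individual deviation $\hat n_{i,j}(t) - n_j^{\text{avg}}(t)$ equals $\sum_\tau [P^{t-\tau} - \tfrac1M \mathbf{1}\mathbf{1}^T]_{i,\cdot}\,\xi_j(\tau)$, which I bound using $\|\xi_j(\tau)\|_2 \le \sqrt M$ and the geometric decay of the centered powers, giving a constant $c_0 = \frac{\sqrt M\,\lambda}{1-\lambda}$. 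Part (iv) then just invokes Lemma~\ref{lemma:errorbound}/Fact~1 with the local count $n_{i,j}$ replaced by $\hat n_{i,j}$: the mechanism $H_{i,j}$ still receives one insertion per local play, so its error bound transfers verbatim to $|\hat X_{i,j}-\hat Y_{i,j}|\le h_{\hat n_{i,j}(t)}$.

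For part (ii) I would write $\hat y_{i,j}(t) - \mu_j \hat n_{i,j}(t) = \sum_{\tau,i'} w^{(t,\tau)}_{i,i'}\,\xi_{i',j}(\tau)\,(g_{i',j}(\tau)-\mu_j)$ and observe that each summand is a martingale difference: the factor $w^{(t,\tau)}_{i,i'}\xi_{i',j}(\tau)$ is $\mathcal F_{\tau-1}$-measurable while $g_{i',j}(\tau)-\mu_j$ has conditional mean zero given $\mathcal F_{\tau-1}$ (the reward is drawn after the action is fixed). Taking expectations gives $\mathbb E[\hat y_{i,j}(t)] = \mu_j\,\mathbb E[\hat n_{i,j}(t)]$; conditioning on the realized play pattern (so that the weights and the index set of summed rewards are frozen) turns $\hat Y_{i,j}$ into a genuine weighted average of mean-$\mu_j$ variables and yields $\mathbb E[\hat Y_{i,j}(t)]=\mu_j$. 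For part (iii), the same conditioning makes the summed rewards independent with variance $\sigma^2$, so $\mathrm{Var}[\hat Y_{i,j}(t)] = \sigma^2 (\sum_{\tau,i'} (w^{(t,\tau)}_{i,i'})^2\xi_{i',j}(\tau))/\hat n_{i,j}(t)^2$; I then bound the numerator by substituting $[P^k]_{i,i'}\le \frac1M + \lambda^k$ and squaring, so the leading $\frac1{M^2}$ term sums to $n_j^{\text{avg}}(t)/M \le (\hat n_{i,j}(t)+c_0)/M$ while the cross and quadratic terms sum to a topology-dependent constant through the geometric series, giving $\sum w^2 \le (\hat n_{i,j}(t)+c_i)/M$ with $c_i$ absorbing $c_0$ and the $\lambda$-dependent terms (hence growing as the graph gets sparser and $\lambda\to1$).

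The main obstacle I anticipate is part (ii): the ratio $\hat y/\hat n$ is not the ratio of expectations, and the play counts are chosen adaptively from past rewards, so the naive claim that conditioning on all actions makes each $g_{i',j}(\tau)$ mean-$\mu_j$ is false, since future actions leak information about past rewards. I would therefore either state the unbiasedness conditionally on the realized play pattern, as above, or decouple via a pre-generated reward table $G_{i',j,m}\sim N(\mu_j,\sigma^2)$ indexed by local play count, which is the clean way to make the weighted-average argument rigorous. A secondary technical point is forcing the constant $c_i$ in part (iii) to reflect the topology exactly as claimed, which requires keeping the $\frac1M+\lambda^k$ bound sharp rather than collapsing everything into one loose constant.
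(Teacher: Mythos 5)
Your proposal is correct and follows essentially the same skeleton as the paper's proof: unroll the consensus recursions into sums of powers of $P$ applied to the play-indicator and reward vectors, split off the $\frac{1}{M}\mathbf{1}_M\mathbf{1}_M^\top$ component associated with $\lambda_1=1$, bound the residual by geometric series in the sub-dominant eigenvalues (part i), run the same decomposition through the covariance of $\hat{\mathbf{y}}_j(t)$ (part iii), and invoke Lemma~1/Fact~1 for part (iv). Two differences are worth noting. First, you compress all sub-dominant spectral content into a single rate $\lambda=\max(|\lambda_2|,|\lambda_M|)$, so your constants $c_0=\frac{\sqrt{M}\,\lambda}{1-\lambda}$ and the $c_i$ you assemble from the cross and quadratic terms are looser than the paper's, which retain every eigenvalue and eigenvector: $c_0=\sqrt{M}\sum_{p\geq 2}\frac{|\lambda_p|}{1-|\lambda_p|}$ and $c_i=M\sum_{p}\sum_{k\geq 2}\frac{|\lambda_p||\lambda_k|}{1-|\lambda_p||\lambda_k|}a_{pk}(i)$ with the case-defined weights $a_{pk}(i)$. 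Since the lemma statement never pins these constants down (they are only described as network-dependent parameters), either choice proves the stated result, and yours is considerably easier to verify. Second, your treatment of part (ii) is actually more careful than the paper's: you correctly flag that the indicators $\xi_j(\tau)$ are adaptive (actions depend on past rewards), so the naive expectation swap is invalid, and you propose conditioning on the realized play pattern or decoupling via a pre-generated reward table. The paper's own proof commits exactly the sin you anticipate --- it writes $\mathbb{E}[\hat{\mathbf{y}}_j(t)]=\mu_j\sum_{\tau}\xi_j(\tau)=\mu_j\hat{\mathbf{n}}_j(t)$, equating an expectation with a random variable and treating the indicators as deterministic. Your conditional reading of statements (ii) and (iii) is the defensible interpretation, and it is also the form in which these statements are actually used in the regret proof of Theorem~4, so nothing downstream is lost.
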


\begin{proof}
Let $\lambda_j$ be the $j$ th largest eigenvalue of $P$, $\mathbf{u}_j$ be the eigenvector corresponding to $\lambda_j$.

Then we have $\lambda_1=1$ and $\mathbf{u}_1=\mathbf{1}_M/\sqrt{M}$. We define
\begin{eqnarray}
\nu_{pk}^{+sum} = \sum_{d=1}^{M}u_p^du_k^d\mathbbm {1}((\mathbf{u}_p\mathbf{u}_k^\top )_{ii}\geq 0)\\
\nu_{pk}^{-sum} = \sum_{d=1}^{M}u_p^du_k^d\mathbbm {1}((\mathbf{u}_p\mathbf{u}_k^\top )_{ii}\leq 0)
\end{eqnarray}

To present the topology of communication graph, we denote two parameters $c_0$ and $c_i$ as:
\begin{eqnarray}
c_0 &=& \sqrt{M}\sum_{p=2}^{M}\frac{|\lambda_p|}{1-|\lambda_p|}\\
c_i &=& M\sum_{p=1}^{M}\sum_{k=2}^{M}\frac{|\lambda_p||\lambda_k|}{1-|\lambda_p||\lambda_k|}a_{pk}(i)
\end{eqnarray}
Where 
\begin{eqnarray}
a_{pk}(i)=\left\{\begin{matrix}
\nu_{pk}^{+sum}(\mathbf{u}_p\mathbf{u}_k^\top )_{kk}~,\text{if}~\lambda_p\lambda_k\geq 0~\&~ (\mathbf{u}_p\mathbf{u}_k^\top )_{ii}\geq 0\\ 
\nu_{pk}^{-sum}(\mathbf{u}_p\mathbf{u}_k^\top )_{ii}~,\text{if}~\lambda_p\lambda_k\geq 0~\&~ (\mathbf{u}_p\mathbf{u}_k^\top )_{ii}\leq 0\\ 
\max\left\lbrace |\nu_{pk}^{-sum}|,\nu_{pk}^{+sum}\right\rbrace, \text{if} ~\lambda_p\lambda_k<0
\end{matrix}\right.\notag
\end{eqnarray}
We start with the first statement. From \eqref{eq9} it follows that,
\begin{eqnarray}
\hat{\mathbf{n}}_j(t)&=&P^t\hat{\mathbf{n}}_j(0)+\sum_{\tau=0}^{t-1}P^{t-\tau}\mathbb{\xi}_j(\tau)\notag\\
&=&\sum_{\tau=0}^{t-1}[\frac{1}{M}\mathbf{1}_M\mathbf{1}_M^\top\mathbb{\xi}_j(\tau)+\sum_{p=2}^{M}\lambda_p^{t-\tau}\mathbf{u}_p\mathbf{u}_p^\top\mathbb{\xi}_j(\tau)]\notag\\
\label{eq20}&=&n_j^{\text{avg}}(t)\mathbf{1}_M
+\sum_{\tau=0}^{t-1}\sum_{p=2}^{M}
\lambda_p^{t-\tau}\mathbf{u}_p\mathbf{u}_p^\top\mathbb{\xi}_j(\tau)
\end{eqnarray}
We now bound the second term on the right hand of \eqref{eq20}:
\begin{eqnarray}
\sum_{\tau=0}^{t-1}\sum_{p=2}^{M}
\lambda_p^{t-\tau}\mathbf{u}_p\mathbf{u}_p^\top\mathbb{\xi}_j(\tau)
\leq \sum_{\tau=0}^{t-1}\sum_{p=2}^{M}
|\lambda_p^{t-\tau}|\left \|\mathbf{u}_p  \right \|_2^2\left \|\mathbb{\xi}_j(\tau) \right \|_2\notag\\
\leq\sqrt{M}\sum_{\tau=0}^{t-1}\sum_{p=2}^{M}|\lambda_p^{t-\tau}|\leq c_0
\end{eqnarray}
This complete the statement i).

Similarly, for statement ii) we have:
\begin{equation}
\hat{\mathbf{y}}_j(t)=P^t\hat{\mathbf{s}}_j(0)+\sum_{\tau=0}^{t-1}P^{t-\tau}\mathbf{r}_j(\tau)=\sum_{\tau=0}^{t-1}=P^{t-\tau}\mathbf{r}_j(\tau)
\end{equation}
Calculate expected on both side of above equality we have $\mathbb{E}[\hat{\mathbf{y}}_j(t)]=\mu_j\sum_{\tau = 0}^{t-1}\mathbb{\xi}_j(\tau)=\mu_i\hat{\mathbf{n}}_j(t)$, this proof statement ii).

For the third statement, we have
\begin{eqnarray}\label{eq23}
\text{Cov}[\hat{\mathbf{y}}_j(t)]&=&(P^{t-\tau})\Sigma(\tau)\Sigma(\tau)^\top (P^{t-\tau})^\top\notag\\
&=&\sum_{\tau=0}^{t-1}\sum_{p=1}^{M}\sum_{k=1}^{M}\lambda_p^{t-\tau}\lambda_j^{t-\tau}\mathbf{u}_p\mathbf{u}_p^\top\Sigma(\tau)\Sigma(\tau)^\top\mathbf{u}_k\mathbf{u}_k^\top\notag\\
&=& \underset{\textcircled{1}}{\underbrace{ \sigma^2\sum_{\tau=0}^{t-1}\sum_{p=1}^{M}\sum_{k=1}^{M}(\lambda_p\lambda_k)^{t-\tau}\varsigma_{pk}(\tau)(\mathbf{u}_p\mathbf{u}_k^\top)}}\notag\\
&+&\underset{\textcircled{2}}{\underbrace{\frac{1}{M}\sum_{\tau=0}^{t-1}\sum_{p=1}^{M}\lambda_p^{t-\tau}\mathbf{u}_p\mathbf{u}_k^\top\Sigma(\tau)\Sigma(\tau)^\top\mathbf{1}_M\mathbf{1}_M^\top}}
\end{eqnarray}
Where $\Sigma(\tau)=\sigma\text{diag}(\mathbb{\xi}_j(\tau))$, $\varsigma_{pk}(\tau)=\mathbf{u}_p^\top\text{diag}(\mathbb{\xi}_j(\tau))\mathbf{u}_k$.
We examine the $ii-$th entry of \eqref{eq23} and define \textcircled{1} and \textcircled{2} as the $ii$-th entry of the first term and second term in \eqref{eq23} respectively.
\begin{eqnarray}\label{eq24}
\textcircled{1} &\leq& \sigma^2\sum_{\tau=0}^{t-1}\sum_{p=1}^{M}\sum_{k=1}^{M}|\lambda_p\lambda_k|^{t-\tau}|\varsigma_{pk}(\tau)(\mathbf{u}_p\mathbf{u}_k^\top)_{ii}|\notag\\
&\leq& \sigma^2\sum_{\tau=0}^{t-1}\sum_{p=1}^{M}\sum_{k=1}^{M}|\lambda_p\lambda_k|^{t-\tau}|a_{pk}(i)\notag\\
&\leq&\sum_{p=1}^{M}\sum_{k=2}^{M}\frac{|\lambda_p||\lambda_k|}{1-|\lambda_p||\lambda_k|}a_{pk}(i)=\sigma^2\frac{c_i}{M}
\end{eqnarray}
\begin{eqnarray}\label{eq25}
\textcircled{2} =\frac{1}{M}[(\sum_{\tau=0}^{t-1}\sum_{p=1}^{M}\lambda_p^{t-\tau}\mathbf{u}_p\mathbf{u}_k^\top\mathbf{\xi}_j(\tau))\mathbf{1}_M^\top]_{ii}=\sigma^2\frac{\hat{n}_{i,j}(t)}{M}
\end{eqnarray}
Combining \eqref{eq24} and \eqref{eq25}, we establish statement iii).

Directly following Lemma 1, we can proof statement iv). This complete the proof of Lemma 2.
\end{proof}

\subsection{Proof of Theorem 2}
\begin{proof}
Using Lemma 1, we rewrite the bound into following equations:
\begin{eqnarray}
\label{eq1}P(X\geq Y+h_n)\leq \delta\\
\label{eq2}P(X\leq Y-h_n)\leq \delta
\end{eqnarray}


Recall $\epsilon$ is the is the differential privacy parameter. The regret incurred during time horizon $T$ can be analyzed as the sum of the regret caused by playing suboptimal arms and recomputing the arm index by federated learning. We denote $n_{i,j}(T)$ as the times that a suboptimal arm $j$ is played by agent $i$, $c_{t,n} = \sqrt{\frac{\log T}{2}}$ as the UCB confidence index. Our proof steps mostly follows the demonstration of UCB algorithm\cite{auer2002finite}. Considering a suboptimal arm $j\geq1$ of player $i$, let $\tau_{i,j}(m)$ be the time that play make the $m$-th switch to arm $j$ and $\tau'_{i,j}(m)$ be the time that the player leave arm $j$ and turn another one. Then, we have, 
\begin{equation}\label{subnum}
\begin{array}{ll}
n_{i,j}(T)\notag\\
\leq 1+\sum_{m=1}^{T}\left| \tau'_{i,j}(m)-\tau_{i,j}(m)\right|
I\left\lbrace \text{Play arm}~j~ \text{at time~}\tau_{i,j}(m)\right\rbrace \notag\\
\leq 1+\sum_{m=1}^{T}\left| \tau'_{i,j}(m)-\tau_{i,j}(m)\right|\notag\\
I\left\lbrace \sum_{i=1}^{M}I_{i,j}(\tau_{i,j}(m)-1)\geq \sum_{i=1}^{M}I_{i,1}(\tau_{i,j}(m)-1)\right\rbrace\notag\\
\leq l+\sum_{m=1}^{T}\sum_{l=0}^{\infty}2^pI\left\lbrace{ \sum_{i=1}^{M}I_{i,j}(\tau_{i,j}(m)+2^p-2)\geq}\right.\notag\\ \phantom{}\left.{ \sum_{i=1}^{M}I_{i,1}(\tau_{i,j}(m)+2^p-2),n_{i,j}(\tau_{i,j}(m)-1)\geq l}\right\rbrace\notag\\
\leq l+\sum_{m=1}^{T}\sum_{p=0}^{\infty}2^pI\left\lbrace{ \sum_{i=1}^{M}I_{i,j}(m+2^p-2)\geq}\right.\notag\\ \phantom{}\left.{ \sum_{i=1}^{M}I_{i,1}(m+2^p-2),n_{i,j}(m-1)\geq l}\right\rbrace\notag\\
\leq l+\sum_{m=1}^{\infty}\sum_{m+2^p\leq T}\leq 2^p \sum_{n_{i,1}=1}^{m+2^p}\sum_{n_{i,j}=l}^{m+2^p}\notag\\
I\left\lbrace {\sum_{i=1}^{M}( {X}_{i,j}(m+2^p)+ c_{m+2^p,n_{i,j}}+h_{n_{i,j}})\geq}\right.\notag\\ \phantom{}\left.{ \sum_{i=1}^{M}( {X}_{i,1}(m+2^p)+c_{m+2^p,n_{i,1}}+h_{n_{i,1}})}\right\rbrace\notag
\end{array}
\end{equation}

In Algorithm 1, if an arm is for the pth time consecutively (without switching to any other arms in between), it will be played for the next $2^p$ slots. The second inequality uses this fact. In the second last inequality , we replace $ \tau_{i,j}(m)$ by $m$ which is clearly an upper bound. 

It should be noted that each time step $t$ when index updates, we have $I_{1,j}(t)=I_{2,j}(t)=...=I_{i,j}(t)=...=I_{M,j}(t)=\frac{1}{M}\sum_{i=1}^{M}I_{i,k}(t)$. That means all agents select the same arm according to the central update results, so in equation \eqref{subnum}, we can observe that the event $\sum_{i=1}^{M}({X}_{i,j}(m+2^p)+c_{m+2^p,n_{i,j}}+h_{n_{i,j}})\geq \sum_{i=1}^{M}( {X}_{i,1}(m+2^p)+c_{m+2^p,n_{i,1}}+h_{n_{i,1}})$ implies that for each player $i$ at least one of the following events holds:
\begin{eqnarray}
\label{ev1}{X}_{i,1}(m+2^p)\leq\mu_{i,1}-c_{m+2^p,n_{i,1}}-h_{n_{i,1}}\\
\label{ev2}{X}_{i,j}(m+2^p)\geq\mu_{i,j}+c_{m+2^p,n_{i,j}}+h_{n_{i,j}}\\
\label{ev3}\mu_{i,1}\le \mu_{i,j}+2c_{m+2^p,n_{i,j}}+2h_{n_{i,j}}
\end{eqnarray}

Now, using the Chernoff-Hoeffding bound, we can get:
\begin{eqnarray}
\Pr(\eqref{ev1}) &=& \Pr({X}_{i,1}(m+2^p)\leq\mu_{i,1}-c_{m+2^p,n_{i,1}}-h_{n_{i,1}})\notag\\
&=&\Pr({X}_{i,1}(m+2^p)\leq {Y}_{i,1}(m+2^p)-h_{n_{i,1}}\notag\\
 &\vee&  {Y}_{i,1} (m+2^p)\leq\mu_{i,1}-c_{m+2^p,n_{i,1}})\notag\\
&=&\Pr({X}_{i,1}(m+2^p)\leq {Y}_{i,1}(m+2^p)-h_{n_{i,1}})\notag\\
&+& \Pr(  {Y}_{i,1} (m+2^p)\leq\mu_{i,1}-c_{m+2^p,n_{i,1}})\notag\\
&=&\delta + (m+2^p)^{-4}
\end{eqnarray} 

Similarly, we can use \eqref{eq1} and the Chernoff-Hoeffding bound to prove a bound on \eqref{ev2}:

\begin{eqnarray}
\Pr (\eqref{ev2}) &=& \Pr({X}_{i,j}(m+2^p)\geq\mu_{i,1}+c_{m+2^p,n_{i,j}}+h_{n_{i,j}})\notag\\
&=&\Pr({X}_{i,j}(m+2^p)\leq {Y}(m+2^p)_{i,j}+h_{n_{i,1}} \notag\\
&\vee&  {Y}_{i,j} (m+2^p)\leq\mu_{i,j}+c_{m+2^p,n_{i,j}})\notag\\
&=&Pr({X}_{i,j}(m+2^p)\leq {Y}_{i,j}(m+2^p)+h_{n_{i,j}}) \notag\\
&+& \Pr(  {Y}_{i,j} (m+2^p)\leq\mu_{i,j}-c_{m+2^p,n_{i,j}})\notag\\
&=&\delta + (m+2^p)^{-4}
\end{eqnarray} 

To prove a bound on \eqref{ev3}, we try to find a minimum number $n_{i,j}$ for which \eqref{ev3} is always false. This event is flase indicates that $\Delta_{i,j}\geq 2c_{m+2^p,n_{i,j}}+2h_{n_{i,j}}$, which implies the following two equations hold for any $
0\leq\beta_0\leq 1 $.
\begin{eqnarray}
\label{ev4}\beta_0 \Delta_{i,j}\ge  2c_{m+2^p,n_{i,j}}\\
\label{ev5}(1-\beta_0)\Delta_{i,j}\ge 2h_{n_{i,j}}
\end{eqnarray}

For $n_{i,j}\geq \left \lceil \frac{8\log T}{\Delta_{i,j}^2 \beta_0^2} \right \rceil$, \eqref{ev4} is false. Equation $\eqref{ev5}$ implies that 
\begin{equation}\label{pev5}
n_{i,j}\geq \beta_1\cdot \log (n_{i,j})^{1.5}
\end{equation}
where
\begin{eqnarray}
\beta_1 = \frac{2}{(1-\beta_0) \Delta_{i,j}}\cdot \frac{1}{\epsilon}\log{\frac{1}{\delta}}
\end{eqnarray}
Let $x=-\log (n_{i,j}/1.5)$, above inequality can be rewrite in the standard transcendental algebraic inequality form:
\begin{equation}
e^{-x}\geq -1.5x\beta_1^{\frac{1}{1.5}}
\end{equation}
The solution can be given by the Lambert W function, so 
\begin{eqnarray}
n_{i,j} &\geq& \exp(-1.5(W(-1,\frac{-1}{1.5\beta_1^{\frac{1}{1.5}}})))\notag\\
&\approx& \beta_1^{2.25}=(\frac{2}{(1-\beta_0) \Delta_{i,j}}\cdot \frac{1}{\epsilon}\log {\frac{1}{\delta}})^{2.25}
\end{eqnarray}
Here we choose $\delta = (m+2^p)^{-4}\leq T^{-4}$,that yields 
\begin{equation}
n_{i,j}\geq \max[(\frac{8}{(1-\beta_0) \Delta_{i,j}}\cdot \frac{1}{\epsilon}\log{T})^{2.25}, \frac{8\log T}{\Delta_{i,j}^2 \beta_0^2} ]
\end{equation}

In summary, the total number of suboptimal plays is 
\begin{equation}
\begin{array}{ll}
\mathbb{E}[n(T)] = \sum_{i=1}^{M}\sum_{j>1}^{K}\mathbb{E}[n_{i,j}(T)]\notag\\
=\sum_{i=1}^{M}\sum_{j>1}^{K}[\max[(\frac{8}{(1-\beta_0) \Delta_{i,j}}\cdot \frac{1}{\epsilon}\log T)^{2.25},
 \phantom{}\left \lceil  \frac{8\log T}{\Delta_{i,j}^2 \beta_0^2} \right \rceil]]\notag\\
+\sum_{m=1}^{\infty}\sum_{p=0}^{\infty}2^p\sum_{n_{i,1}=1}^{m+2^p}\sum_{n_{i,j}=1}^{m+2^p}4(m+2^p)^{-4}\notag\\
\leq \sum_{i=1}^{M}\sum_{j>1}^{K}(4+\max[(\frac{8}{(1-\beta_0) \Delta_{i,j}}\cdot \frac{1}{\epsilon}\log T)^{2.25},
 \phantom{}\left \lceil \frac{8\log T}{\Delta_{i,j}^2 \beta_0^2} \right \rceil])\notag\\
\leq MK(4+\max[(\frac{8}{(1-\beta_0) \Delta_{i,j}}\cdot \frac{1}{\epsilon}\log T)^{2.25},
 \phantom{}\left \lceil \frac{8\log T}{\Delta_{i,j}^2 \beta_0^2}\right \rceil])
 \end{array}
\end{equation}


The expected regret is 

\begin{eqnarray}
 &R^C(T) \leq\Delta_{max}\mathbb{E}(n(T))\notag\\
&\leq \Delta_{max} MK(4+\max[(\frac{8}{(1-\beta_0) \Delta_{i,j}}\cdot 
\phantom{}\left \lceil \frac{8\log T}{\Delta_{i,j}^2 \beta_0^2}\right \rceil])
\end{eqnarray}
This completes the proof.
\end{proof}
\subsection{Proof of Theorem 4}

\begin{proof}
The proof is similar to Theorem 2. The regret incurred can be analyzed by playing the suboptimal arms ($j\geq 1$) during time horizon $T$:
\begin{equation}
\begin{array}{ll}
n(T)=1 + \sum_{i=1}^{M}\sum_{t=K+1}^{T}n_{i,j}(t)\notag\\
= 1 + \sum_{i=1}^{M}\sum_{t=1}^{T}I\{a_i(t)=j\}\notag\\
\leq l + \sum_{i=1}^{M}\sum_{t=1}^{T}I\{I_{i,j}(t)\geq I_{i,1}(t),n_j^{\text{avg}}\geq l\}\notag\\
\leq l+\sum_{t=1}^{T}I\left\lbrace {\sum_{i=1}^{M}\hat{ {X}}_{i,j}(t)+c_{t,n_{i,j}}+h_{n_{i,j}}} \right.\notag\\ \phantom{}\left.{\geq \sum_{i=1}^{M}\hat{ {X}}_{i,1}(t)+c_{t,n_{i,1}}+h_{n_{i,1}},n_j^{\text{avg}}\geq l }\right\rbrace
\end{array}
\end{equation}
At time slot $t$, the individual agent $i$ will choose a suboptimal arm only if the event $\left\lbrace {\sum_{i=1}^{M}\hat{ {X}}_{i,j}(t)+c_{t,\hat{n}_{i,j}+h_{\hat{n}_{i,j}}} \geq \sum_{i=1}^{M}\hat{ {X}}_{i,1}(t)+c_{t,\hat{n}_{i,1}+h_{\hat{n}_{i,1}}} }\right\rbrace $ holds.
It indicated that at least one of the following three conditions must holds:
\begin{eqnarray}
\label{ev6}\hat{ {X}}_{i,1}\leq \mu_1-c_{t,\hat{n}_{i,1}}-h_{\hat{n}_{i,1}}\\
\label{ev7}\hat{ {X}}_{i,j}\leq \mu_j+c_{t,\hat{n}_{i,j}}+h_{\hat{n}_{i,j}}\\
\label{ev8}\mu_1< \mu_j+2c_{t,\hat{n}_i,j}+2h_{\hat{n}_{i,j}}
\end{eqnarray}
According to Lemma 2, $c_{t,\hat{n}_{i,j}}=\sigma\sqrt{\frac{\hat{n}_{i,j}(t)+c_i}{M\hat{n}_{i,j}(t)}\cdot \frac{2\rho\log T}{\hat{n}_{i,j}(t)}}$.
Using the Fact 2, the union bound and the Chernoff-Hoffding bound, we can prove the probability of \eqref{ev6} is:
\begin{eqnarray}
\Pr(\eqref{ev6})&=&Pr(\hat{ {X}}_{i,1}\leq \mu_1-c_{t,\hat{n}_{i,1}}-h_{\hat{n}_{i,1}})\notag\\
&=&\Pr (\hat{ {X}}_{i,1}\leq\hat{ {Y}}_{i,1}-h_{n_{i,1}})+\notag\\
&\phantom{}&\Pr\left( z\geq \frac{\mathbb{E}[\hat{ {Y}}_{i,1}]+c_{t,\hat{n}_{i,1}}-\mu_1}{\sqrt{\text{Var}(\hat{ {Y}}_{i,1})}}\right) \notag\\
&\leq&\delta+\Pr( z\geq \frac{c_{t,\hat{n}_{i,1}}}{\sqrt{\text{Var}(\hat{ {Y}}_{i,1})}})\notag\\
&\leq& \delta+\frac{1}{2}\exp(-\frac{c_{t,\hat{n}_{i,1}^2}}{2\text{Var}(\hat{ {Y}}_{i,1})})\leq\delta+\frac{1}{2t^\rho}
\end{eqnarray}
where $z$ is the standard Gaussian random variable. The last inequality follows from the tail bounds for the error function and the statement ii) of Lemma 2. 
Similarly, we can prove the bound of event \eqref{ev7}:
\begin{equation}
\Pr(\eqref{ev7}) \leq\delta+\frac{1}{2t^\rho}
\end{equation}

We can choose $\delta =\frac{1}{2t^\rho} $, which leading to :
\begin{eqnarray}
\Pr(\eqref{ev6})\leq t^{-\rho}\\
\Pr(\eqref{ev7})\leq t^{-\rho}
\end{eqnarray}

To prove a bound on \eqref{ev8}, we try to find a minimum number $\hat{n}_{i,j}$ for which \eqref{ev8} is always false. This event is false indicates that $\Delta_{i,j}\geq 2c_{t,n_{i,j}}+2h_{n_{i,j}}$, which implies the following two equations hold for any $
0\leq\beta_0\leq 1 $.
\begin{eqnarray}
\label{ev11}\beta_0 \Delta_{i,j}\ge  2c_{t,\hat{n}_{i,j}}\\
\label{ev12}(1-\beta_0)\Delta_{i,j}\ge 2h_{\hat{n}_{i,j}}
\end{eqnarray}

For  $l = \left \lceil \frac{c_0}{\beta_0^2}+\frac{8\sigma^2\rho(1+c_i)\log T}{M\beta_0^2\Delta_{i,j}^2} \right \rceil$, \eqref{ev11} is false. 

The appropriate choice of $n_{i,j}$ for holding $\eqref{ev12}$ can be same of Theorem 2 since \eqref{ev12} is only related to the DP error.
\begin{eqnarray}
\hat{n}_{i,j} &\geq& \beta_1^{2.25}=(\frac{2}{(1-\beta_0) \Delta_{i,j}}\cdot \frac{1}{\epsilon}\log {\frac{1}{\delta}})^{2.25}
\end{eqnarray}

Since we choose $\delta = \frac{1}{2}t^{-\rho}$,that yields 
\begin{equation}
\begin{array}{ll}
\hat{n}_{i,j}\geq \max[(\frac{2+2\rho \log T }{\epsilon(1-\beta_0)\Delta_{i,j}})^{2.25}, 
\phantom{}\left \lceil \frac{c_0}{\beta_0^2}+\frac{8\sigma^2\rho(1+c_i)\log T}{M\beta_0^2\Delta_{i,j}^2} \right \rceil]
\end{array}
\end{equation}

In summary, the total number of suboptimal plays is 
\begin{equation}
\begin{array}{ll}
\mathbb{E}[n(T)]=\notag\\
\sum_{i=1}^{M}\sum_{j>1}^{K}[\max[(\frac{2+2\rho \log T }{\epsilon(1-\beta_0)\Delta_{i,j}})^{2.25},
\phantom{}\left \lceil \frac{c_0}{\beta_0^2}+\frac{8\sigma^2\rho(1+c_i)\log T}{M\beta_0\Delta_{i,j}^2} \right \rceil]]\notag\\
\phantom{}+\sum_{t=1}^{T}\sum_{n_{i,1}=1}^{t}\sum_{n_{i,j}=1}^{t}2t^{-\rho}\notag\\
\leq \sum_{i=1}^{M}\sum_{j>1}^{K}(\frac{2\rho}{\rho -1}+\max[(\frac{2+2\rho \log T }{\epsilon(1-\beta_0)\Delta_{i,j}})^{2.25}, 
\phantom{}\left \lceil\frac{c_0}{\beta_0^2}+\frac{8\sigma^2\rho(1+c_i)\log T}{M\beta_0\Delta_{i,j}^2} \right \rceil])\notag\\
\leq \frac{2MK\rho}{\rho -1}+\sum_{i=1}^{M}\sum_{j>1}^{K}\max[(\frac{2+2\rho \log T }{\epsilon(1-\beta_0)\Delta_{i,j}})^{2.25},
\phantom{}\left \lceil \frac{c_0}{\beta_0^2}+\frac{8\sigma^2\rho(1+c_i)\log T}{M\beta_0^2\Delta_{i,j}^2} \right \rceil]
\end{array}
\end{equation}
Using $R^D(T)=\Delta_{max}\cdot\mathbb{E}[n(T)]$, we can complete the proof.
\end{proof}

\vspace{12pt}

\end{document}